\newlength{\strutheight}
\newcommand{\N}{\ensuremath{\mathbb{N}}}
\renewcommand{\epsilon}{\varepsilon}
\newcommand{\trans}{\mathsf{t}}
\DeclareMathOperator*{\argmax}{arg\,max}
\newcommand\E{\mathcal{E}}
\newcommand\EPath{\mathcal{E}\!Path}
\newcommand\Tail{T}
\def\BState{\State\hskip-\ALG@thistlm}
\newcommand{\subalign}[1]{%
  \vcenter{%
    \Let@ \restore@math@cr \default@tag
    \baselineskip\fontdimen10 \scriptfont\tw@
    \advance\baselineskip\fontdimen12 \scriptfont\tw@
    \lineskip\thr@@\fontdimen8 \scriptfont\thr@@
    \lineskiplimit\lineskip
    \ialign{\hfil$\m@th\scriptstyle##$&$\m@th\scriptstyle{}##$\hfil\crcr
      #1\crcr
    }%
  }%
}
\author{Rebecca Bernemann\inst{1} \and Barbara K\"onig\inst{1}\and
  Matthias Schaffeld\inst{1} \and Torben Weis\inst{1}}
\authorrunning{R. Bernemann et al.}
\institute{Universit\"at Duisburg-Essen, Duisburg, Germany}
\title{Probabilistic Systems with Hidden State and Unobservable
  Transitions}
\begin{document}

\maketitle

\begin{abstract}
  We consider probabilistic systems with hidden state and unobservable
  transitions, an extension of Hidden Markov Models (HMMs) that in
  particular admits unobservable $\epsilon$-transitions (also called
  null transitions), allowing state changes of which the observer is
  unaware. Due to the presence of $\epsilon$-loops this additional
  feature complicates the theory and requires to
  carefully set up the corresponding probability space and random
  variables. In particular we present an algorithm for determining the
  most probable explanation given an observation (a generalization of
  the Viterbi algorithm for HMMs) and a method for parameter learning
  that adapts the probabilities of a given model based on an
  observation (a generalization of the Baum-Welch algorithm). The
  latter algorithm guarantees that the given observation has a higher
  (or equal) probability after adjustment of the parameters and its
  correctness can be derived directly from the so-called EM algorithm.
\end{abstract}

\section{Introduction}
\label{sec:introduction}

There are many practical applications that involve the observation of
a probabilistic system with hidden state, where the aim is to infer
properties about the state of the system only from the observations
that are available.

In particular we are motivated by the following scenario: imagine a
building equipped with sensors that are triggered when a person walks
past. However, these sensors might produce both false positives
(nobody walked past, but the sensor sends a signal) and false
negatives (somebody was present, but did not trigger the sensor). This
can be modelled by a probabilistic transition system which has both
observable symbols and $\epsilon$-transitions (also referred to as
null transitions), corresponding to false negatives. Now assume that
there are three rooms, the bedroom (B), the corridor (C) and the
kitchen (K), all of them connected through C and equipped with
sensors. Sensors B, K trigger, but not C. However, in order to
reach the kitchen from the bedroom, the person should have passed the
corridor! Hence our analysis should tell us that the most likely
explanation for the observation is indeed the sequence B, C, K.

While here this reasoning is straightforward, % in this example
it may become
increasingly more complex with additional missing sensor data and
multiple possible paths.
%that can be taken.

In order to make matters more concrete, consider the following
system depicting our motivational example.
The start state is $s_0$ and from each state we label the
transitions with symbols and probabilities. For instance, from $s_0$
there is a probability of $0.1$ of going to C with an
unobservable $\epsilon$-transition.

Now we detect the observation: b,k. What happened? In fact there are
several possible paths, the most probable being $s_0$, B, C, K (first
transition b, second one $\epsilon$ and the third k) whose probability
is
\[\delta(s_0)(\text{b}, \text{B}) \cdot \delta(\text{B})(\epsilon, \text{C}) 
\cdot \delta(\text{C})(\text{k}, \text{K}) = 0.4 \cdot 0.3 \cdot 0.5 = 0.06\]

\todo{M:\\  
        1. $s_0$ -b-> B -$\epsilon$-> C -k-> K \\
        2. $s_0$ -$\epsilon$-> B -$\epsilon$-> C -b-> B -$\epsilon$-> C -k-> K \\
        3. $s_0$ -$\epsilon$-> C -b-> B -$\epsilon$-> C -k-> K\\
        4. $s_0$ -$\epsilon$-> K -$\epsilon$-> C -b-> B -$\epsilon$->
        C -k-> K \\
      B: point taken ;-) I wrote ``several possible paths''}

\begin{wrapfigure}[7]{r}{0.42\linewidth}
  \vspace{-1.35cm}
\begin{center}
  \begin{tikzpicture}
    \node[state]             (C) {C};
    \node[state, left=of C, xshift=-2mm] (B) {B};
    \node[state, right=of C, xshift=2mm] (K) {K};
    \node[state, above=of C, yshift=-2mm] (S) {$s_0$};
    
    \draw[every loop]
    (B) edge[auto=left, draw=white, bend left=10]  node {$\subalign{\text{c} \  &0.7 \\
        \epsilon \  &0.3}$} (C)
    (B) edge[auto=left, bend left=10]  node {} (C)

    (C) edge[auto=left, draw=white, bend left=10]  node {$\subalign{\text{b} \  &0.3 \\
        \epsilon \  &0.1}$} (B)
    (C) edge[auto=left, bend left=10]  node {} (B)

    (C) edge[auto=left, draw=white, bend left=10]  node {$\subalign{\text{k} \  &0.5 \\
        \epsilon \  &0.1}$} (K)
    (C) edge[auto=left, bend left=10] node {} (K)

    (K) edge[auto=left, draw=white, bend left=10]  node {$\subalign{\text{c} \  &0.8 \\
        \epsilon \  &0.2}$} (C)
    (K) edge[auto=left, bend left=10] node {} (C)

    (S) edge[auto=right, bend right=10] node {$\subalign{\text{b} \  &0.4 \\
        \epsilon \  &0.05}$} (B)
    (S) edge[auto=left, bend left=10] node {$\subalign{\text{k} \  &0.2 \\
        \epsilon \  &0.05}$} (K)
    (S) edge[auto=right] node[yshift=4pt] {$\subalign{\text{c} \  &0.2 \\
        \epsilon \  &0.1}$} (C);

    \draw[->, black] (0,2.35) -- (S) node[pos=.35, xshift=2mm]{};
  \end{tikzpicture}
\end{center}  
\end{wrapfigure}
The question is how to efficiently determine the most likely path and
its probability.

A second issue is how to learn the probabilities that label the
transitions. %We
Assume %that we know
the basic structure of the system is known, in particular the number
of states, but %do not know
the parameters, i.e., the probabilities are not. Now we observe the
system and want to estimate its parameters.

Of course, such systems have been extensively studied under the name
of Hidden Markov Models (HMMs)
\cite{r:tutorial-hidden-markov-models,wv:hmm-methods-protocols}. %\todo{B: Mention HMMs in PRISM,
% Storm?}
Unobservable transitions, also known as null transitions or
%(in our case)
$\epsilon$-transitions, i.e. HMMs that may change state without the
observer being aware of it, have been proposed in the literature,
especially in the context of speech recognition
\cite{bjm:max-likelihood-speech,j:statistical-methods}.  However, to
the best of our knowledge, there is no theory of systems that allows
for $\epsilon$-loops. Related work only treats specific HMMs, where
either loops can not appear altogether (so-called left-to-right HMMs)
\cite{otdfd:learning-scripts} or $\epsilon$-loops are forbidden
\cite{b:maximization-technique,bjm:max-likelihood-speech}.  Also
\cite{j:statistical-methods} does not allow $\epsilon$-loops in the
context of learning. It does describe an algorithm to eliminate
$\epsilon$-transitions, which can increase the size of the model and
alters the system such that probabilities of $\epsilon$-transitions
can not be learned directly.
% Furthermore, the literature is unclear about the probability space and
% does not contain any extensive proofs or arguments of correctness
% \cite{bjm:max-likelihood-speech}.
%However, to the best of our knowledge, there is no
%theory on systems with unobservable transitions, i.e., HMMs that may
%change state without the observer being aware of it, maybe apart from
%\cite{bjm:max-likelihood-speech}. Yet \cite{bjm:max-likelihood-speech}
%does not allow cycles of $\epsilon$-transitions, is unclear about the
%probability space and does not contain any proofs or arguments of
%correctness.

% \todo[inline]{B: Explain why reference
%   \cite{bjm:max-likelihood-speech} is inadequate: no cycles of
%   $\epsilon$-transitions; probabilities of paths for an observation
%   can add up to $>1$; no proofs or arguments of correctness}

% \todo[inline]{B: Argue that $\epsilon$-loops can occur very
%   naturally!}

On the other hand $\epsilon$-loops can occur naturally in
applications, e.g., in the application described above that we have in
mind, and there are no easy work\-arounds. We can only speculate why the
generalization has not been made, but observe that for systems with
$\epsilon$-loops% 
\todo{M: Bis jetzt immer $\epsilon$-loops und kommt auch nur hier
  vor. Ist das eine beabsichtigte unterscheidung? \\ B: Ist dasselbe, ich
habe jetzt aber ``loops'' geschrieben.}
the theory (in particular for parameter learning) is more complex since
the Baum-Welch algorithm has to be fundamentally adapted to deal with
this scenario. In particular the usual forward-backward algorithm
\cite{r:tutorial-hidden-markov-models} for parameter estimation can
not be used directly, but has to be generalized.
%
% Adapting the Viterbi algorithm calls for computing the probability of
% reaching a target %state
% from a source state, using only $\epsilon$-transitions, which
% corresponds to determining the termination probability of a
% probabilistic transition system \cite{bk:principles-mc}.\todo{B: cite
%   Baier/Katoen?}

%For adapting the Viterbi algorithm we have to %be able to 
%compute the
%probability %to reach 
%reaching a target %state
%from a source state, using only
%$\epsilon$-transitions, which corresponds to determining the
%termination probability of a probabilistic transition system
%\cite{bk:principles-mc}.\todo{B: cite Baier/Katoen?}
%
Given an observation, we propose to compute the conditional
expectation of passing each transition (remember that the state and so
the transitions are hidden) and this is turned into a probability
function (by normalizing), which is the new parameter estimate.  The
guarantee provided by this estimate is that the observation sequence
becomes more probable given the new parameters. This result can be
derived from the so-called expectation-maximization (EM) algorithm
that gives us a general framework for such results.

Our contributions are:

\noindent$\triangleright$ We rephrase the theory behind HMMs with
$\epsilon$-transitions, being precise about the used probability space
and random variables.

\noindent$\triangleright$ We extend the theory to systems with
$\epsilon$-transitions, a very natural extension for such systems and
indispensable for our application, which complicates the formalization
and the algorithms. In particular, we %now
have to handle $\epsilon$-loops %the handling of
which we deal with by setting up fixpoint equations.
%requires fixpoint theory.

\noindent$\triangleright$ We spell out explicitly why parameter
learning based on the EM algorithm works in this setting.

While HMMs have been known for some time, we feel that, due to the
current large interest in learning approaches (e.g., machine
learning or the L* algorithm \cite{a:l-star}), it makes sense to
revive the theory and close existing gaps in the literature.

\section{Preliminaries}
\label{sec:preliminaries}

\paragraph*{Probability theory.}

We recapitulate the basics of discrete probability theory.

A \emph{probability space} $(\Omega,P)$ consists of a (countable)
\emph{sample space} $\Omega$ and a probability function
$P\colon \Omega\to [0,1]$ such that $\sum_{\omega\in\Omega} P(\omega) =
1$. Given a set $\Omega$ we denote by $\mathcal{D}(\Omega)$ the set of
all \emph{probability functions} on $\Omega$.

A \emph{random variable} for a probability space $(\Omega,P)$ is a
function $X\colon \Omega\to V$. We assume a special random variable
$Z\colon\Omega\to\Omega$, which is the identity. For $v\in V$ we
denote by $P(X=v) = \sum_{X(\omega)=v} P(\omega)$ the probability that
$X$ has value $v$. Given two random variables
$X_i\colon \Omega\to V_i$, $i=1,2$, the conditional probability that
$X_1$ takes value $v_1$, under the condition that $X_2$ takes value
$v_2$, is
\[ P(X_1=v_1\mid X_2=v_2) = \frac{P(X_1=v_1\land
    X_2=v_2)}{P(X_2=v_2)}, \]
provided that $P(X_2=v_2) > 0$.

For a random variable $X\colon\Omega\to \mathbb{R}$ (where we might
enrich the real numbers with $-\infty$), we define its
\emph{expectation} as
$E[X] = \sum_{\omega\in\Omega} X(\omega) \cdot P(\omega)$. Given
another random variable $Y\colon \Omega\to V$, conditional expectation
is defined, for $v\in V$, as
\[ E[X\mid Y=v] = \sum_{\omega\in\Omega} X(\omega) \cdot P(Z=\omega\mid Y=v) . \]

\paragraph*{Hidden Markov Models.}

We are working with HMMs with $\epsilon$-transitions (also called null
transitions in the literature
\cite{bjm:max-likelihood-speech,j:statistical-methods}).  \todo{M: Der
  Einschub steht hier zum dritten mal: Abstract, Einleitung (referred
  to...) und hier \\ B: Im Abstract sehe ich es jetzt nicht (da habe
  ich auch mal was geändert). Ansonsten ist es mMn ok, das zweimal zu
  sagen.}  In particular we put observations on the transitions,
rather than on the states \cite{r:tutorial-hidden-markov-models}. This
is a standard variant for HMMs and is for instance done in
\cite{bjm:max-likelihood-speech,j:statistical-methods,dde:pa-hmm}. Since
we use $\epsilon$-transitions, we need not assume an initial
probability function, but instead later fix a start state $s_0$.

\begin{definition}[HMM with $\epsilon$-transitions]
  An HMM with $\epsilon$-transitions is a three-tuple
  $(S,\Sigma,\delta)$, consisting of a finite state space $S$, an
  alphabet $\Sigma$ and a transition function
  $\delta\colon S \rightarrow \mathcal{D}((\Sigma \cup \{\epsilon\})
  \times S)$.
  The set of transitions is defined as
  $T_\delta = \{(s, a, s') \mid s, s' \in S, a \in \Sigma \cup
  \{\epsilon\}, \delta(s)(a,s') > 0\}$. In addition
  $T_\delta^s = T_\delta\cap (\{s\}\times (\Sigma\cup \{\epsilon\}) \times
  S)$.
\end{definition}

%As an example
% Consider the HMM from the introduction. The set of states is
% $S = \{s_0, \text{B}, \text{C}, \text{K}\}$ and the alphabet is
% $\Sigma=\{\text{b}, \text{c}, \text{k}\}$. Whenever
% $\delta (s)(y, s') = p > 0$ where $y \in \Sigma \cup \{\epsilon\}$, we
% draw an arrow from $s$ to $s'$ and label it with $y$ and $p$. We
% consider $s_0$ as the start state.\todo{B: We can remove this
%   paragraph if we need space.}
%In this probabilistic system there are $\epsilon$-transitions, but there are no $\epsilon$-loops.

\section{Probability Space and Random Variables}
\label{sec:probability-space-random-variables}

The probability space of observation sequences contains alternating
sequences of states and observation symbols (or $\epsilon$) and is
dependent on $n$, denoting the number of observable symbols (from
$\Sigma$). We fix a start state $s_0\in S$ and restrict the possible
sequences to those where the second to last element is contained in
$\Sigma$ and is hence observable. This is needed to make sure that the
probabilities in fact sum up to~$1$.
\[\Omega^n_{s_0} = s_0 ((\epsilon S)^* \Sigma S)^n\]
The probability for an element from the probability space
$\widetilde{z} \in \Omega^n_{s_0}$ can be calculated by multiplying
the corresponding transition probabilities. Whenever
$\widetilde{z} = s_0 a_1 s_1 a_2 \dots a_m s_m \in \Omega^n_{s_0}$
where $s_i \in S$ and $a_i \in \Sigma \cup \{\epsilon\}$ we define:
\todo{$\in \Omega^n$ oder $\in \Omega^n_{s_0}$ \\ B: ?? Wir sagen
  weiter oben, dass $\widetilde{z} \in \Omega^n_{s_0}$.}
$P^n_{s_0}(\widetilde{z}) = \prod_{i=1}^m \delta(s_{i-1})(a_i,
s_i)$. Furthermore $P^0_{s_0}(s_0)=1$.

Note that due to the presence of $\epsilon$-transitions we have to take
care to set up a probability space where the probabilities add up
to $1$. An alternative could be to use the solution of
\cite{j:statistical-methods} and to distinguish a final state, which
is however inconvenient for some applications. We continue by showing
that the probability space is well-defined under some mild conditions.
These conditions have the additional benefit that the fixpoint
equations become contractive (after a number of iterations)
and hence have unique solutions (for more details see
Appendix~\ref{sec:contractivity}).

\begin{propositionrep}
  \label{prop:prob-space-welldefined}
  Assume that for each state $s\in S$ there is an outgoing path of
  non-zero probability that contains a symbol in $\Sigma$. Then the
  probability space is well-defined, in particular
  $\sum_{\widetilde{z}\in\Omega^n_{s_0}} P^n_{s_0}(\widetilde{z}) =
  1$.
\end{propositionrep}

\begin{proof}
  Given a sequence
  $\widetilde{z} = s_0 a_1 s_1 a_2 \dots a_m s_m = s_0 a_1 \widetilde{z}_1$ we
  observe that
  \begin{align*}
    P^n_{s_0}(\widetilde{z}) &= \delta(s_0)(a_1,s_1)\cdot
    P^n_{s_1}(\widetilde{z}_1) &\text{if $a_1=\epsilon$} \\
    P^n_{s_0}(\widetilde{z}) &= \delta(s_0)(a_1,s_1)\cdot
    P^{n-1}_{s_1}(\widetilde{z}_1) &\text{if $a_1\in \Sigma$}
  \end{align*}
  We abbreviate
  $S^n_{s_0} = \sum_{\widetilde{z}\in\Omega^n_{s_0}}
  P^n_{s_0}(\widetilde{z})$. Since $\Omega^0_{s_0} = \{s_0\}$, it is
  easy to see that $S^0_{s_0} = 1$ and if $n\ge 1$:
  \begin{align*}
    S^n_{s_0} = \sum_{\widetilde{z}\in\Omega^n_{s_0}} P^n_{s_0}(\widetilde{z})
    &= \sum_{s_1\in S} \sum_{\widetilde{z}_1\in\Omega^n_{s_1}}
    \delta(s_0)(\epsilon,s_1)\cdot P^n_{s_1}(\widetilde{z}_1)
    \mathop{+} \\
    &\sum_{a_1\in\Sigma, s_1\in S}
    \sum_{\widetilde{z}_1\in\Omega^{n-1}_{s_1}}
    \delta(s_0)(a_1,s_1)\cdot P^{n-1}_{s_1}(\widetilde{z}_1) \\
    &= \sum_{s_1\in S} \delta(s_0)(\epsilon,s_1)\cdot
    \sum_{\widetilde{z}_1\in\Omega^n_{s_1}} P^n_{s_1}(\widetilde{z}_1)
    \mathop{+} \\
    &\sum_{a_1\in\Sigma, s_1\in S} \delta(s_0)(a_1,s_1)\cdot
    \sum_{\widetilde{z}_1\in \Omega^{n-1}_{s_1}} P^{n-1}_{s_1}(\widetilde{z}_1) \\
    &= \sum_{s_1\in S} \delta(s_0)(\epsilon,s_1)\cdot S^n_{s_1}
    \mathop{+} \sum_{a_1\in\Sigma, s_1\in S} \delta(s_0)(a_1,s_1)\cdot
    S^{n-1}_{s_1}
  \end{align*}
  Since the probabilities of all outgoing transitions of a state sum
  up to $1$, we observe that $S^n_{s} = 1$ for all $n,s\in S$ is a
  solution to this system of fixpoint equations. Since we assume that
  each state will eventually reach an observation symbol in $\Sigma$
  with non-zero probability, the corresponding fixpoint function is
  contractive after some iterations, since the probability to stay
  with index $n$ is strictly less than $1$ after at most $|S|$ steps
  (for more details see Appendix~\ref{sec:contractivity}).

  This implies that the fixpoint is unique and hence the statement of
  the proposition follows.  \qed
\end{proof}

We will in the following assume that the requirement of
Prop.~\ref{prop:prob-space-welldefined} holds. Otherwise there
might be states that can never reach an observation symbol, for which
the probability of all outgoing paths is $0$.

Given this probability space, we define some required random
variables.

\vspace*{0.1cm}
\noindent\begin{tabularx}{\textwidth}{|l|X|}
  \hline
  \multicolumn{2}{|c|}{Random Variables} \\
  \hline
  $Z: \Omega^n_{s_0} \rightarrow \Omega^n_{s_0}$ & Identity on $\Omega^n_{s_0}$\\
  $Y: \Omega^n_{s_0} \rightarrow \Sigma^n$ \qquad& Projection to
  observable symbols (removal of $\epsilon$'s and states) \\
  $L: \Omega^n_{s_0} \rightarrow S$  & Last state of a given observation sequence \\ 
  $X_\trans: \Omega^n_{s_0} \rightarrow \N_0$  & Number of times a
  transition $\trans = (s, a, s')$ occurs in a sequence \\
  \hline
\end{tabularx}
\vspace*{0.1cm}

%The first one denotes the observation generated from an
%enriched state sequence.
%\[Y: \Omega^n_{s_0} \rightarrow \Sigma^n\]
%It takes a sequence containing $n$ symbols in $\Sigma$ and an
%arbitrary number of $\epsilon$'s and projects to the elements of
%$\Sigma$, removing all other elements.  Furthermore, the random
%variable $Z$ is just the identity on $\Omega^n_{s_0}$.
%\[Z: \Omega^n_{s_0} \rightarrow \Omega^n_{s_0}\]
%We sometimes also need the last state of a given observation sequence: 
%\[  L: \Omega^n_{s_0} \rightarrow S\]
%The last random variable that we require denotes the number of times,
%a specific transition $\trans = (s, a, s')$ occurs on a path
%$\widetilde{z}$, where $\#_\trans(\widetilde{z})$ denotes the number of
%occurrences of $\trans$ in $\widetilde{z}$.
%\[X_\trans: \Omega^n_{s_0} \rightarrow \N_0 \qquad
%X_\trans(\widetilde{z}) = \#_\trans(\widetilde{z})\]

We omit the indices $n,s_0$ if they are clear from the context: if we
write $P(\widetilde{z})$ or $P(Z=\widetilde{z})$ we work in the
probability space $\Omega^n_{s_0}$ and mean the probability function
$P^n_{s_0}$, where $n=|Y(\widetilde{z})|$ and $s_0$ is the first
element of $\widetilde{z}$. And if we write
$P_{s_0} (Y=\widetilde{y})$, the value $n$ is understood to be
$|\widetilde{y}|$. We do the same for expectations.

\section{Finding the Best Explanation for an Observation}
\label{sec:best-explanation}

As a warmup we will describe a method for finding the best
explanation, given an observation sequence. More concretely, an
observation sequence $\widetilde{y}\in\Sigma^*$ is given and it is our
aim to compute the most probable sequence of states and its
probability. For standard
%Hidden Markov Models 
HMMs there is a well-known algorithm for this task: the Viterbi
algorithm
\cite{v:viterbi-algorithm,f:viterbi-algorithm,r:tutorial-hidden-markov-models}. Instead
of enumerating all paths and checking which one is most probable, it
uses intermediate results, by computing step-by-step the most probable
path ending at a given state $s$, for each prefix\todo{B: prefix or
  suffix?} of the observation sequence $\widetilde{y}$.

We now adapt the Viterbi algorithm, taking $\epsilon$-transitions into
account. While in the standard case it is straightforward to obtain
the likeliest path in the case of a single observation symbol, in our
case the path might have taken an arbitrary number of
$\epsilon$-transitions in between.
Remember that the probability space is set up in such a way that the
last transition in every sequence that we consider is always
observable, which is no restriction, since there is always some
explanation with maximal probability that satisfies this condition.

% The algorithm is given a probabilistic system and a symbol sequence $\widetilde{y}$, denoted as the observation. It then recursively calculates the probability for the most likely path in the system producing the observation sequence.

% First, we handle $\epsilon$-transitions specifically. When transitioning from state $s$ to $s'$ with a certain symbol, we have to determine if the direct way is more probable than taking extra $\epsilon$ steps in between. With respect to the previously defined probability space, we assume that state $s'$ gets entered through a non $\epsilon$ transition and the search ends there.

\begin{propositionrep}[Maximal probability for one observation]
  \label{prop:Epsilon}
  Let $(S, \Sigma, \delta)$ be an HMM and $a\in\Sigma$ be an
  observation.  With $\E^a_{s_0, s}$, for $s_0,s\in S$ we denote the
  probability for the most likely path in $\Omega^1_{s_0}$, starting
  in state $s_0$ and ending in state $s$, where $a$ is the
  observation. Then we have:
  \begin{align*}
    \E^a_{s_0, s} &= \max_{\substack{\widetilde{z} \in \Omega^1_{s_0}\\
        L(\widetilde{z}) = s}}
      P(Z = \widetilde{z} \land Y = a)
      = \max \Big( \delta(s_0)(a, s)\, , \max_{s' \in S}
      \delta(s_0)(\epsilon, s') \cdot \E^a_{s', s}
      \Big)
  \end{align*}
\end{propositionrep}

\begin{proof}
  \begin{align*}
    \E^a_{s_0, s} &= \max_{\substack{\widetilde{z} \in \Omega_{s_0}^1 \\
        L(\widetilde{z}) = s}} P^1(Z = \widetilde{z} \land Y = a) 
    = \max_{\substack{\widetilde{z} \in \\
        s_0(\epsilon S)^* a s}}
    P^1(Z = \widetilde{z} \land Y = a)\\
    &= \max \Big( \underbrace{ P^1(Z = s_0as \land Y = a)
    }_{\delta(s_0)(a, s)} \, ,
    \max_{\substack{\widetilde{z}_1 \in \\
        S(\epsilon S)^* a s}}
    P^1(Z = s_0 \epsilon \widetilde{z}_1 \land Y = a) \Big) \\
    &= \max \Big( \delta(s_0)(a, s) \, ,\ \max_{s' \in S}
    \max_{\substack{\widetilde{z}_1 \in \\ s'(\epsilon
        S)^*a s}} \delta(s_0)(\epsilon, s') \cdot
    P^1(Z = \widetilde{z}_1 \land Y = a) \Big) \\
    &= \max \Big( \delta(s_0)(a, s) \, ,\ \max_{s' \in S}
    \delta(s_0)(\epsilon, s') \cdot
    \underbrace{ \max_{\substack{\widetilde{z}_1 \in \\
          s'(\epsilon S)^*a s}}
      P^1(Z = \widetilde{z}_1 \land Y = a) \Big)
    }_{\E^a_{s', s}} \\
    &= \max \Big( \delta(s_0)(a, s)\, , \max_{s' \in
      S} \delta(s_0)(\epsilon, s')
    \cdot \E^a_{s', s} \Big)
  \end{align*}
  \qed
\end{proof}

The equation of Prop.~\ref{prop:Epsilon} has a unique fixpoint
due to the requirement that from every state there is a path of
non-zero probability that contains an observation.  In order to
compute $\E^a_{s_0, s}$ one could hence perform fixpoint iteration or
use an external solver. In fact, the computation is simplified in this
case since among the paths with the highest probability there is
always one that does not contain duplicate states (apart from the
final state $s$). By equipping the computation with an extra parameter
$S_0\subseteq S$ (the set of states that can still be visited), we can
easily ensure termination, even in the presence of $\epsilon$-loops,
and the equation becomes the following, where $\E^a_{s_0, s} =
\E^a_{s_0, s}(S)$. 
\[ \E^a_{s_0, s}(S_0) = \max \Big( \delta(s_0)(a, s)\, ,
  \max_{s' \in S_0\backslash \{s_0\}} \delta(s_0)(\epsilon,
  s') \cdot \E^a_{s', s}(S_0\backslash\{s_0\})
  \Big) \] 
%and
%$\E^a_{s_0,s}(\emptyset) = 0$.%\todo{B: Even if $s=s_0$?}

We can now address the task of computing the maximal probability for a
longer sequence of observations.  For this purpose, we extend the
established Viterbi algorithm \cite{v:viterbi-algorithm}.  Here, the
probability for the likeliest path that results in a given observation
is computed inductively and is based on
Prop.~\ref{prop:Epsilon}.

\begin{propositionrep}[Maximal probability for observation sequence]
  Let $(S, \Sigma, \delta)$ be an HMM and let
  $\widetilde{y} = a_1 \dots a_n = \widetilde{y}_1a_n$ be an observation
  sequence. Then $V_{s_0,s}^{\widetilde{y}}$ denotes the maximum
  probability of observing $\widetilde{y}$ and ending in state $s$,
  more formally
  \[ V_{s_0,s}^{\widetilde{y}} =
    \max_{\substack{\widetilde{z}\in\Omega^n_{s_0}\\
        L(\widetilde{z})=s}} P(Z = \widetilde{z} \land Y =
    \widetilde{y}) \] For $n=0$ we have $V_{s_0,s}^{\epsilon}=1$ if
  $s = s_0$ and $0$ otherwise. For $n>0$:
  \begin{align*}
    V_{s_0,s}^{\widetilde{y}} &= \max_{s' \in S} \;
    V_{s_0,s'}^{\widetilde{y}_1} \cdot \E^{a_n}_{s', s} 
  \end{align*}
\end{propositionrep}

\begin{proof}
\begin{align*}
  &V_{s_0,s}^{\widetilde{y}} = V_{s_0,s}^{a_1 \dots a_n} \\
  &=
  \max_{\substack{\widetilde{z}\in\Omega^n_{s_0}\\
      L(\widetilde{z})=s}} \;
  P(Z = \widetilde{z} \land Y = a_1 \dots a_n) \\
  &= \max_{s' \in S} \;
  \max_{\substack{\widetilde{z}_1\in\Omega^{n-1}_{s_0}\\L(\widetilde{z}_1)
      = s'}}
  \max_{\substack{\widetilde{z}_2 \in \Omega^{1}_{s'}\\
      L(\widetilde{z}_2) = s}} \; P(Z = \widetilde{z}_1 \land Y =
  a_1\dots a_{n-1}) \, \cdot
  P(Z = \widetilde{z}_2 \land Y = a_n)\\
  &= \max_{s' \in S} \; \underbrace{
    \max_{\substack{\widetilde{z}_1\in\Omega^{n-1}_{s_0}\\L(\widetilde{z}_1)
        = s'}} \; P(Z = \widetilde{z}_1 \land Y = a_1\dots a_{n-1})
  }_{V_{s_0,s'}^{a_1 \dots a_{n-1}}} \, \cdot \underbrace{
    \max_{\substack{\widetilde{z}_2 \in \Omega^{1}_{s'}\\
        L(\widetilde{z}_2) = s}} P(Z = \widetilde{z}_2 \land Y = a_n)
  }_{\E^{a_n}_{s', s}}\\
  &= \max_{s' \in S} \; V_{s_0,s'}^{a_1 \dots a_{n-1}}\cdot
  \E^{a_n}_{s', s} = \max_{s' \in S} \; V_{s_0,s'}^{\widetilde{y}_1}\cdot
  \E^{a_n}_{s', s}
\end{align*}
  \qed
\end{proof}

In order to obtain the best explanation starting at $s_0$, regardless
of its final state, we still have to take the maximum
$\max_{s\in S} V_{s_0,s}^{\widetilde{y}}$. If we are instead
interested in the conditional probability, i.e.,
$\max_{\widetilde{z}\in\Omega^n_{s_0}} P(Z = \widetilde{z} \mid Y =
\widetilde{y})$, it can be obtained from this maximum by dividing by
$P_{s_0}(Y=\widetilde{y})$.

%Obtaining the probability alone is not sufficient for our purposes, as
%we are also interested in how this probability emerged.
Since the computation of the most likely path is almost identical to
the computation of the highest probability, we elaborate on this only
in the appendix.

\begin{toappendix}
In order to obtain the most likely path resulting in a given observation,
we make use of the calculated highest probabilities. We do
not only expect a sequence of states as explanation, but also the
intermediate symbols or $\epsilon$'s used when transitioning from
state to state. This is necessary because a possible
$\epsilon$-transition can implicitly occur in an observation, creating
an ambiguity problem when working out which exact transitions where
taken at what time in the state sequence.

% \todo{B: we can remove the explicit construction of the paths if we
%   need space and move this to the appendix.}

By unravelling the fixpoint equation of
Prop.~\ref{prop:Epsilon}, we obtain the following construction,
where $\EPath^a_{s_0, s}\in \Omega^1_{s_0}$ denotes the likeliest state
sequence for a path starting in state $s_0$ and ending in state $s$
that produces the observation $a$ in its last transition. This is
feasible whenever $\E^a_{s_0,s} > 0$.

\[
  \EPath^a_{s_0, s} = \left\{
    \begin{array}{ll}
      s_0as & \mbox{if $\E_{s_0,s}^y = \delta(s_0)(a,s)$} \\
      s_0\epsilon \EPath^a_{s', s} & \mbox{otherwise, with
        $s' = \argmax_{s'\in S}
        \delta(s_0)(\epsilon,s')\cdot \E_{s',s}^y$}
    \end{array}\right.
\]

Note that in the following the operator $\circ$ denotes
concatenation and the tail function $\Tail$ removes the first
element of a given input sequence and returns the rest.

\begin{proposition} [Likeliest state sequence for observation sequence]
  Given an HMM $(S, \Sigma, \delta)$ and an observation sequence
  $\widetilde{y} = a_1 \dots a_n$. Whenever
  $V_{s_0,s}^{\widetilde{y}} > 0$, the term
  $\psi_{s_0,s}^{\widetilde{y}}$ denotes the likeliest state sequence
  starting in state $s_0$, ending in $s$, that explains
  $\widetilde{y}$, in particular
  \[ \psi_{s_0,s}^{\widetilde{y}} = \argmax_{\substack{\widetilde{z}\in
        \Omega^n_{s_0} \\ L(\widetilde{z})=s}} P(Z = \widetilde{z}
    \land Y = \widetilde{y}). \] It holds that
  $\psi_{s_0,s_0}^{\epsilon} = s_0$.  Furthermore, whenever
  $\widetilde{y} = \widetilde{y}_1a_n$:
  \begin{align*}
    \psi_{s_0,s}^{\widetilde{y}} &= 
     \psi_{s_0,s'}^{\widetilde{y}_1}\circ \Tail(\EPath^{a_n}_{s',s}),
  \end{align*}
  where
  $s' = \argmax_{s' \in S} \; 
  V_{s_0,s'}^{\widetilde{y}_1}\cdot \E^{a_n}_{s',s}$.
\end{proposition}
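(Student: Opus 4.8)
The plan is to follow the proof of the proposition on $V_{s_0,s}^{\widetilde{y}}$ almost verbatim, only carrying along an optimal witness sequence instead of merely its probability value, and to proceed by induction on the length $n=|\widetilde{y}|$. For $n=0$ we have $\Omega^0_{s_0}=\{s_0\}$ by definition, so the unique sequence ending in a state is $s_0$ itself (forcing that state to be $s_0$), whence $\psi_{s_0,s_0}^{\epsilon}=s_0$ and $P(Z=s_0\land Y=\epsilon)=P^0_{s_0}(s_0)=1=V_{s_0,s_0}^{\epsilon}$.

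For the inductive step, write $\widetilde{y}=\widetilde{y}_1a_n$. Exactly as in the proof of the $V$-proposition, the regular-expression description $\Omega^n_{s_0}=s_0((\epsilon S)^*\Sigma S)^n$ shows that every $\widetilde{z}\in\Omega^n_{s_0}$ with $L(\widetilde{z})=s$ factors uniquely (split off the last block $(\epsilon S)^*\Sigma S$) as $\widetilde{z}=\widetilde{z}_1\circ\Tail(\widetilde{z}_2)$ with $\widetilde{z}_1\in\Omega^{n-1}_{s_0}$, $L(\widetilde{z}_1)=s'$ for some $s'\in S$, and $\widetilde{z}_2\in\Omega^1_{s'}$, $L(\widetilde{z}_2)=s$; reading off the transition probabilities gives the factorization $P(Z=\widetilde{z}\land Y=\widetilde{y})=P(Z=\widetilde{z}_1\land Y=\widetilde{y}_1)\cdot P(Z=\widetilde{z}_2\land Y=a_n)$ already used there. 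Hence maximizing over $\widetilde{z}$ with $L(\widetilde{z})=s$ splits into
\[
 \max_{\substack{\widetilde{z}\in\Omega^n_{s_0}\\ L(\widetilde{z})=s}}P(Z=\widetilde{z}\land Y=\widetilde{y})
 = \max_{s'\in S}\Big(\max_{\substack{\widetilde{z}_1\in\Omega^{n-1}_{s_0}\\ L(\widetilde{z}_1)=s'}}P(Z=\widetilde{z}_1\land Y=\widetilde{y}_1)\Big)\cdot\Big(\max_{\substack{\widetilde{z}_2\in\Omega^1_{s'}\\ L(\widetilde{z}_2)=s}}P(Z=\widetilde{z}_2\land Y=a_n)\Big),
\]
and an argmax is obtained by picking a maximizing intermediate state $s'$, an optimal prefix for the first factor and an optimal suffix for the second. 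By the induction hypothesis $\psi_{s_0,s'}^{\widetilde{y}_1}$ is an optimal prefix attaining $V_{s_0,s'}^{\widetilde{y}_1}$, and by the construction of $\EPath$ obtained from unravelling Prop.~\ref{prop:Epsilon}, $\EPath^{a_n}_{s',s}$ is an optimal suffix attaining $\E^{a_n}_{s',s}$; the best choice of intermediate state is therefore $s'=\argmax_{s'\in S}V_{s_0,s'}^{\widetilde{y}_1}\cdot\E^{a_n}_{s',s}$. Concatenating the prefix with the tail of the suffix (the $\Tail$ removes the duplicated occurrence of $s'$) yields $\psi_{s_0,s}^{\widetilde{y}}=\psi_{s_0,s'}^{\widetilde{y}_1}\circ\Tail(\EPath^{a_n}_{s',s})$, as claimed.

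The points that need care, rather than a genuinely hard step, are: (i) that the prefix/suffix split is a bijection compatible with the probability product — this is inherited verbatim from the proof of the $V$-proposition; (ii) that $\EPath^{a_n}_{s',s}$ is well-defined and really realizes $\E^{a_n}_{s',s}$, which rests on the unique-fixpoint argument behind Prop.~\ref{prop:Epsilon} together with the observation that some probability-maximal one-symbol path visits no state twice (except possibly its endpoint), so that the unravelling terminates even in the presence of $\epsilon$-loops; and (iii) that the various $\argmax$'s are resolved by a fixed tie-breaking convention, so that ``the'' likeliest sequence is well-defined — any such convention works, since all maximizers carry the same probability. The only new ingredient beyond the $V$-proposition is this bookkeeping of the witnessing sequence.
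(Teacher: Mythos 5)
Your proof follows the same route as the paper: split each $\widetilde{z}\in\Omega^n_{s_0}$ into a prefix in $\Omega^{n-1}_{s_0}$ ending in some $s'$ and a one-observation suffix in $\Omega^1_{s'}$, factor the probability, and replace the maxima of the $V$-proposition by argmaxes realized by $\psi_{s_0,s'}^{\widetilde{y}_1}$ and $\EPath^{a_n}_{s',s}$ with $s'=\argmax_{s'\in S} V_{s_0,s'}^{\widetilde{y}_1}\cdot\E^{a_n}_{s',s}$, which is exactly the paper's chain of equalities made explicit as an induction. Your added remarks on the well-definedness of $\EPath$ and tie-breaking are fine but not a different argument, so this is essentially the paper's proof.
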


\begin{proof}
  \begin{align*}
    &\psi_{s_0,s}^{\widetilde{y}} = \psi_{s_0,s}^{a_1 \dots a_n} = \\
    &=
    \argmax_{\substack{\widetilde{z}\in\Omega^n_{s_0}\\L(\widetilde{z})=s}}
    P(Z = \widetilde{z} \land Y = a_1 \dots a_n) \\
    &= \underbrace{
      \argmax_{\substack{\widetilde{z}_1\in\Omega^{n-1}_{s_0}\\
          L(\widetilde{z}_1) = s'}} P(Z = \widetilde{z}_1 \land Y =
      a_1\dots a_{n-1}) }_{\psi^{a_1\dots a_{n-1}}_{s_0, s'}} \, \circ
    \Tail(\underbrace{
      \argmax_{\substack{\widetilde{z}_2 \in \Omega^{1}_{s'}\\
          L(\widetilde{z}_2)=s}} P(Z = \widetilde{z}_2 \land Y = a_n)
    }_{\EPath_{s',s}^{a_n}})\\
    &= \psi_{s_0,s'}^{a_1 \dots a_{n-1}} \circ \Tail(\EPath^{a_n}_{s', s}) =
    \psi^{\widetilde{y}_1}_{s_0, s'}\circ \Tail(\EPath_{s',s}^{a_n})
  \end{align*}
  where
  $s' = \argmax_{s' \in S} \; V_{s_0,s'}^{a_1 \dots a_{n-1}}\cdot
  \E^{a_n}_{s', s}$ is the state where the maximum is reached. \qed
\end{proof}
\end{toappendix}

\section{Parameter Learning}
\label{sec:parameter-learning}

We now discuss a method for determining the system parameters. We
assume that the structure of the system and initial probabilities are
given, and those probabilities have to be adjusted through observing
output sequences.  This core problem for HMMs is traditionally solved
by the Baum-Welch algorithm \cite{b:maximization-technique}, which is
based on the forward-backward algorithm, but because of
$\epsilon$-transitions and in particular $\epsilon$-loops, it is
necessary to develop a different approach.

\subsection{Conditional Expectation of the Number of Transition
  Traversals}

% \todo[inline]{B: Cite \cite{gs:markov-chains} for expected number of
%   transitions?}

To adjust the probabilities, we have to solve the following subtask:
Given an HMM with initial state $s_0$, an observation sequence
$\widetilde{y}$ %, with $|\widetilde{y}| = n$
and a transition $\trans$, determine the expected value of the number
of traversals of $\trans$, when observing sequence $\widetilde{y}$,
starting from $s_0$.
For each state, we determine these values for all outgoing transitions
and normalize them to obtain probabilities. This gives us new
parameters and we later discuss the guarantees that this approach
provides.

If there are no $\epsilon$-loops, it is sufficient to compute the
probability of crossing a given transition $\trans$ while reading the
$i$-th symbol of the observation sequence and to sum up over all
$i$. This is done with the forward-backward algorithm, determining the
probability of reaching the source state of $\trans$, multiplied with
the probability of $\trans$ and the probability of reading the
remaining observation sequence from the target state. In the present
setup, this has to be adapted, since we may cross $\trans$ several
times while reading the $i$-th symbol.

% Furthermore $A_n^s \subseteq \Omega_s$ is the set of all paths starting in state $s$ and producing the observation sequence $\widetilde{y}$. Therefore the set is defined by

% \begin{align*}
%   A_n^s = \{s ((\Sigma \cup \{\epsilon\}) Q)^* \in \Omega^n \mid Y(w) = \widetilde{y}\}
% \end{align*}

We want to determine $E_{s_0}[ X_\trans \mid Y=\widetilde{y}]$ or,
equivalently,
$E_{s_0}[ X_\trans \mid Y=\widetilde{y}]\cdot
P_{s_0}(Y=\widetilde{y})$. This is defined if
$P_{s_0}(Y=\widetilde{y}) > 0$, which we assume since the sequence
$\widetilde{y}$ has actually been observed. Note that due to the
nature of our probability space, $\epsilon$-transitions that might be
traversed after the last observation do not count.
We compute the conditional expectation by setting up a suitable
fixpoint equation.

% \todo[inline]{B: Write $C^{\widetilde{y}}_{s_0,\trans}$ instead of
%   $C^{s_0}_{\trans,\widetilde{y}}$?}

\begin{propositionrep}
  \label{prop:cond-exp}
  Fix an HMM and an observation sequence
  $\widetilde{y} = a_1\dots a_n = a_1\widetilde{y}_1$.  Let
  $\trans = (s,a,s')$ and define
  \[ C^{\widetilde{y}}_{s_0,\trans} = E_{s_0}[X_\trans \mid Y=\widetilde{y}]
    \cdot P_{s_0}(Y=\widetilde{y}). \]

  Then $C^{\epsilon}_{s_0,\trans} = 0$ and the following fixpoint
  equation holds: whenever $a\in\Sigma$
  \begin{align*}
    C^{\widetilde{y}}_{s_0,\trans} &= \sum_{s_1\in S}
    \delta(s_0)(a_1,s_1) \cdot C^{\widetilde{y}_1}_{s_1,\trans} +
    \sum_{s_1\in S} \delta(s_0)(\epsilon,s_1) \cdot
    C^{\widetilde{y}}_{s_1,\trans} \mathop{+} \\
    & \qquad\qquad [s_0=s \land a_1=a]\cdot
    \delta(s)(a,s')\cdot P_{s'}(Y=\widetilde{y}_1)
  \end{align*}
  and whenever $a=\epsilon$ the last summand has to be replaced by
  $[s_0=s]\cdot \delta(s)(\epsilon,s')\cdot P_{s'}(Y=\widetilde{y})$.
  We use the convention that $[b]=1$ if $b$ holds and $[b]=0$
  otherwise.
\end{propositionrep}

\begin{proof}
  Note that by assumption $P_{s_0}(Y=\widetilde{y}) > 0$.
  We compute
  \begin{align*}
    C^{\widetilde{y}}_{s_0,\trans} &= E_{s_0}[X_\trans \mid
    Y=\widetilde{y}]
    \cdot P_{s_0}(Y=\widetilde{y}) \\
    &= \sum_{\widetilde{z}\in\Omega^n_{s_0}}
    X_\trans(\widetilde{z})\cdot
    P(Z=\widetilde{z}\mid Y=\widetilde{y}) \cdot P_{s_0}(Y=\widetilde{y}) \\
    &= \sum_{\widetilde{z}\in\Omega^n_{s_0}}
    X_\trans(\widetilde{z})\cdot
    P(Z=\widetilde{z}\land Y=\widetilde{y}) \\
    &=
    \sum_{\substack{\widetilde{z}\in\Omega^n_{s_0}\\Y(\widetilde{z})=\widetilde{y}}}
    X_\trans(\widetilde{z})\cdot P(Z=\widetilde{z}) \\
    &= \sum_{s_1\in S}
    \sum_{\substack{\widetilde{z}_1\in\Omega^{n-1}_{s_1}\\Y(s_0a_1\widetilde{z}_1)=\widetilde{y}}}
    X_\trans(s_0a_1\widetilde{z}_1)\cdot P(Z=s_0a_1\widetilde{z}_1)
    \mathop{+} \\
    & \qquad\qquad \sum_{s_1\in S} \sum_{\substack{\widetilde{z}_1\in
        \Omega^n_{s_1}\\Y(s_0\epsilon\widetilde{z}_1)=\widetilde{y}}}
    X_\trans(s_0\epsilon\widetilde{z}_1)\cdot P(Z=s_0\epsilon\widetilde{z}_1) \\
    &= \sum_{s_1\in S}
    \sum_{\substack{\widetilde{z}_1\in\Omega^{n-1}_{s_1}\\Y(\widetilde{z}_1)=\widetilde{y}_1}}
    (X_\trans(\widetilde{z}_1)+[s_0=s\land a_1=a \land s_1=s'])
    \mathop{\cdot} \\
    & \qquad\qquad\qquad\delta(s_0)(a_1,s_1)\cdot P(Z=\widetilde{z}_1)
    \mathop{+} \\
    & \qquad\qquad \sum_{s_1\in S}
    \sum_{\substack{\widetilde{z}_1\in\Omega^n_{s_1}\\Y(\widetilde{z}_1)=\widetilde{y}}}
    X_\trans(\widetilde{z}_1)\cdot \delta(s_0)(\epsilon,s_1)\cdot P(Z=\widetilde{z}_1) \\
    &= \sum_{s_1\in S} \delta(s_0)(a_1,s_1)\cdot
    \sum_{\substack{\widetilde{z}_1\in\Omega^{n-1}_{s_1}\\Y(\widetilde{z}_1)=\widetilde{y}_1}}
    X_\trans(\widetilde{z}_1)\cdot P(Z=\widetilde{z}_1)
    \mathop{+} \\
    & \qquad\qquad \sum_{s_1\in S} \delta(s_0)(\epsilon,s_1)\cdot
    \sum_{\substack{\widetilde{z}_1\in\Omega^n_{s_1}\\Y(\widetilde{z}_1)=\widetilde{y}}}
    X_\trans(\widetilde{z}_1)\cdot P(Z=\widetilde{z}_1) \mathop{+} \\
    & \qquad\qquad \sum_{s_1\in S}[s_0=s\land a_1=a\land
    s_1=s']\cdot\delta(s_0)(a_1,s_1)\cdot
    \sum_{\substack{\widetilde{z}_1\in
        \Omega^{n-1}_{s_1}\\Y(\widetilde{z}_1)=\widetilde{y}_1}}
    P(Z=\widetilde{z}_1) \\
    &= \sum_{s_1\in S} \delta(s_0)(a_1,s_1) \cdot
    C^{\widetilde{y}_1}_{s_1,\trans} + \sum_{s_1\in S}
    \delta(s_0)(\epsilon,s_1) \cdot
    C^{\widetilde{y}}_{s_1,\trans} \mathop{+} \\
    & \qquad\qquad [s_0=s \land a=a_1]\cdot \delta(s)(a,s')\cdot
    P_{s'}(Y=\widetilde{y}_1).
  \end{align*}
  \qed
\end{proof}

\todo[inline]{B: Move example to the end of subsection?}

Since the equations are contractive after some iterations (cf.\
Appendix~\ref{sec:contractivity}),
%\todo{B: say more about contractivity here.},
they have a unique fixpoint, which can be
approximated by (Kleene) iteration or computed via a solver.
For this we have to be able to determine $P_{s_0}(Y=\widetilde{y})$
for $\widetilde{y} = a_1\widetilde{y}_1$, which can be done with a
similar fixpoint equation (adapt the proof of
Prop.~\ref{prop:cond-exp} to the case where $X_\trans$ is the constant
$1$-function): $P_{s_0}(Y=\epsilon) = 1$ and otherwise:
\[ P_{s_0}(Y=\widetilde{y}) = \sum_{s_1\in S}
  \delta(s_0)(a_1,s_1) \cdot P_{s_1}(Y=\widetilde{y}_1) \mathop{+}
  \sum_{s_1\in S} \delta(s_0)(\epsilon,s_1) \cdot
  P_{s_1}(Y=\widetilde{y}). \]

%Shorter Example:
\vspace*{-1cm}
\begin{adjustbox}{valign=C,raise=\strutheight,minipage={1\linewidth}}
  \begin{wrapfigure}[7]{r}{0.16\linewidth} % picture on the right
    \begin{tikzpicture}
        \node[state]             (S0) {$s_0$};
        \draw[every loop]
      
        (S0) edge[loop above]  node {$\subalign{\epsilon \quad &\sfrac{1}{2}\\
        \alpha \quad &\sfrac{1}{4} \\
        \beta \quad &\sfrac{1}{4}\\}$} (S0);
      
        \draw[->, black] (-1,0) -- (S0) node[pos=.35]{};
      \end{tikzpicture}
  \end{wrapfigure}% 
  \strut{}
  \vspace*{0.5cm} % This is a fudge to align the top of the theorem environment with the image
  \begin{example}
    Given the following HMM on the right where the states and
    transitions are known, but the probabilities have to be adjusted
    by observing the system. The three transitions are
    named $\trans_1 = (s_0, \epsilon, s_0)$,
    $\trans_2 = (s_0, \alpha, s_0)$, $\trans_3 = (s_0, \beta, s_0)$
    and the observation sequence is $\widetilde{y}=\alpha$. Then:
    \begin{align*}
      C^{\widetilde{y}}_{s_0,\trans_1} &=
      \underbrace{\delta(s_0)(\alpha, s_0)}_{\sfrac{1}{4}} \cdot
      \underbrace{C^{\epsilon}_{s_0,\trans_1}}_{0} +
      \underbrace{\delta(s_0)(\epsilon, s_0)}_{\sfrac{1}{2}} \cdot
      C^{\widetilde{y}}_{s_0,\trans_1} +
      1 \cdot \underbrace{\delta(s_0)(\epsilon, s_0)}_{\sfrac{1}{2}} \cdot \underbrace{P_{s_0}(Y = \widetilde{y})}_{\sfrac{1}{4} + \sfrac{1}{2} \cdot P_{s_0}(Y = \widetilde{y})}\\
      &\Rightarrow C^{\widetilde{y}}_{s_0,\trans_1} = \sfrac{1}{2}
      = E_{s_0}[X_{\trans_1} \mid Y = \widetilde{y}] \cdot
      \underbrace{P_{s_0}(Y = \widetilde{y})}_{\sfrac{1}{2}} \quad
      \Rightarrow \, E_{s_0}[X_{\trans_1} \mid Y = \widetilde{y}] =
      1
    \end{align*}
    Similarly, we compute $E_{s_0}[X_{\trans_2} \mid Y = \widetilde{y}] = 1$ and $E_{s_0}[X_{\trans_3} \mid Y = \widetilde{y}] = 0$.
    The adjusted and normalized probability parameters are:
    \begin{center}
      $\delta(s_0)(\epsilon, s_0) = \sfrac{1}{2} \qquad
      \delta(s_0)(\alpha, s_0) =\sfrac{1}{2} \qquad
      \delta(s_0)(\beta, s_0) = 0$
    \end{center}
    In practice one will of course make longer or multiple
    observations before adjusting the parameters.
    % In that case, the individual conditional expectation values for a
    % transition $\trans_i$ will be summed up:
    % $\sum_{j}E_{s_0}[X_{\trans_i} \mid Y = \widetilde{y}_j]$ and
    %normalized.
  \end{example}%
\end{adjustbox}
%\vspace*{0.1cm}

% \hrule

% \todo{B: remove the following?}

% \begin{align*}
%   &E[X_\trans_n \mid A_n^s] \cdot P(A_n^s) = \sum_{w \in A_n^s} P(w) \cdot X_n^\trans(w) \\
%   &= \sum_{\substack{(s, a, s')  \\ \in T_\delta\setminus\{\trans\}}}
%     \sum_{v \in A_n^{s'}} \delta(s)(a, s') \cdot P(v) \cdot X_n^\trans(v)
%     + \sum_{\substack{(s, a, s') \\= \trans}} \sum_{v \in A_{s'}}
%     \delta(s)(a, s') \cdot P(v) \cdot (1 + X_n^\trans(v)) \\
%   &= \sum_{\substack{(s, a, s')  \\ \in T_\delta\setminus\{\trans\}}} \delta(s)(a, s') \cdot
%     \underbrace{\sum_{v \in A_n^{s'}} P(v) \cdot X_n^\trans(v)}_
%     {=E[X\mid A_{s'}]\cdot P(A_n^{s'})}\\
%     &\qquad\qquad\qquad\qquad
%      + \sum_{\substack{(s, a, s') \\= \trans}} \delta(s)(a, s') \cdot 
%       (\underbrace{\sum_{v \in A_n^{s'}} P(v)}_{=P(A_n^{s'})} +
%     \underbrace{\sum_{v \in A_n^{s'}} P(v) \cdot
%       X_n^\trans(v)}_{=E[X_n^\trans\mid A_n^{s'}] \cdot P(A_n^{s'})} ) \\
%   &= \sum_{\substack{(s, a, s')  \\ \in T_\delta\setminus\{\trans\}}} \delta(s)(a, s') \cdot E[X_n^\trans \mid A_n^{s'}] \cdot P(A_n^{s'})
%     + \sum_{\substack{(s, a, s') \\= \trans}} \delta(s)(a, s') \cdot P(A_n^{s'}) \cdot (1 + E[X_n^\trans \mid A_n^{s'}])
%   \end{align*}
  
% \todo{\textbf{Re}: Now that we have $n$ in $\Omega^n$, we will have to add more cases: what if $A(t) = \epsilon$, $A(t) \neq \epsilon$, $A(u) = \epsilon$, ...}

% \todo{B: is the fixpoint unique here? contractivity?}

% \todo[inline]{B: Discuss that we always stop after the last observed
%   symbol?}

\subsection{Using the EM Algorithm}

We will now introduce the so-called Expectation Maximization (EM)
Algorithm \cite{dlr:em-algorithm}, which is commonly used to derive
the %aforementioned
Baum-Welch algorithm
\cite{r:tutorial-hidden-markov-models,b:gentle-tutorial-EM} for
parameter estimation. 
\todo{\textbf{Re}: Wikipedia: The Baum-Welch algorithm is a special
  case of the EM algorithm

Rabiner 89:
- elaborates on the connection between Baum Welch (forward-backward algorithm) and EM algorithms
- shows some kind of "proof" (beginning) of P maximizing

Bimes 98:
- using the Q function from the EM algorithm to derive Baum Welch}
It explains how to suitably adjust
(probabilistic) parameters of a system in such a way that the
likelihood of observing the given output of the system increases.
We assume that the higher the
%probabilities for these
probability for observed sequences, the closer the parameters are to
their actual values.
% That means that we make it more probable that the observation sequence
% happened within the system.
This procedure is divided into two phases:
% , as already revealed through its name:
the Expectation and %the
Maximization phase.

Fix %we have
an HMM with %some
known %parameters (%states and 
(graph structure) and %others
unknown parameters (transition
probabilities). %These
The unknown parameters, denoted by $\theta$, can be
learned by observing the system.  We will use $\theta$ in conditional probabilities
or expectations to clarify the parameter dependency. E.g.,
$\delta(\trans\mid \theta)$ with $\trans = (s,a,s')$ stands for
$\delta(s)(a,s')$ under the parameter setting~$\theta$.

The algorithm works iteratively in two phases. $\theta^{t}$
always denotes our current best guess of the probabilistic parameters,
$\theta$ denotes the new parameters that we wish to learn and improve
iteratively given an observation sequence $\widetilde{y}$.  In the
first phase we calculate $Q(\theta \mid \theta^{t})$ denoting the
expected value of the log likelihood function for $\theta$ with
respect to the current conditional probability of $Z$ given an
observation and the current estimates of the parameter
$\theta^t$. More concretely:
\[ Q(\theta \mid \theta^{t}) = E_{Z \mid Y, \theta^{t}} [\log P(Y, Z
  \mid \theta)], \] which denotes the expectation of the random
variable $\widetilde{z}\mapsto \log P(Y=\widetilde{y},Z=\widetilde{z}\mid \theta)$
in an updated probability space where the probability function is
$P'(\widetilde{z}) = P(Z=\widetilde{z}\mid Y=\widetilde{y},\theta^{t})$. Here it
is understood that $\log 0 = -\infty$ and $0\cdot (-\infty) = 0$.
%, since $\lim_{x\to 0} x\cdot \log x = 0$.

%\todo[inline]{B: relations to Kullback-Leibler divergence?}

% The log function is used to simply substitute the product of probabilities by the sum of probabilities, as it doesn't actually affect the maximum.

After the first phase follows the Maximization phase, where
$\theta^{t+1}$ is determined as
$\argmax_{\theta} Q(\theta \mid \theta^{t})$ and the algorithm
subsequently starts again with phase one. This happens iteratively
until $\theta^{t+1}=\theta^{t}$ or the improvements are below some
threshold. In general we will converge to a local optimum, finding the
global optimum is typically infeasible.
The guarantee of the EM algorithm is that
$P(Y=\widetilde{y}\mid \theta) > P(Y=\widetilde{y}\mid \theta^t)$ whenever
$Q(\theta\mid \theta^t) > Q(\theta^t\mid \theta^t)$.

\begin{theoremrep}
  In our setting it holds that
  \[Q(\theta \mid \theta^{t}) = \sum_{s \in S} \sum_{\substack{\trans
        = (s, a, s') \in T_\delta}} \log \delta(\trans \mid \theta)
    \cdot E_{s_0}[X_\trans \mid Y = \widetilde{y}, \theta^{t}]. \] The value
  $Q(\theta \mid \theta^{t})$ is maximal when the parameters $\theta$
  are as follows: for every transition $\trans$ we set
  $\delta(\trans\mid \theta)$ proportional to
  $E_{s_0}[X_{\trans} \mid Y=\widetilde{y}, \theta^{t}]$.
\end{theoremrep}

\begin{proof}
  In this proof we write $\trans_i(\widetilde{z})$ for the $i$-th
  transition of $\widetilde{z}$, i.e., if
  $\widetilde{z} = s_0a_1s_1a_2\dots a_ms_m$, then
  $\trans_i(\widetilde{z}) = (s_{i-1},a_i,s_i)$. Note that
  $1\le i\le \sfrac{(|\widetilde{z}|-1)}{2}$. Hence
  $P(Z = \widetilde{z} \mid \theta) =
  \prod_{i=1}^{\sfrac{(|\widetilde{z}|-1)}{2}}
  \delta(\trans_i(\widetilde{z}) \mid \theta)$.

  Furthermore:
  \begin{align*}
    Q(\theta \mid \theta^{t}) &= E_{Z \mid Y, \theta^{t}} [\log P(Y, Z \mid \theta)]\\
    &= \sum_{\widetilde{z}\in\Omega^n_{s_0}} \log P(Y = \widetilde{y}, Z = \widetilde{z} \mid \theta) \cdot P(Z = \widetilde{z} \mid Y = \widetilde{y}, \theta^{t})\\
    &=
    \sum_{\substack{\widetilde{z}\in\Omega^n_{s_0}\\Y(\widetilde{z})=\widetilde{y}}}
    \log P(Z = \widetilde{z} \mid \theta) \cdot P(Z = \widetilde{z}
    \mid Y = \widetilde{y}, \theta^{t})\\
    &= \sum_{\substack{\widetilde{z}\in\Omega^n_{s_0}\\Y(\widetilde{z})=\widetilde{y}}} \log \prod_{i=1}^{\sfrac{(|\widetilde{z}|-1)}{2}} \delta(\trans_i(\widetilde{z}) \mid \theta) \cdot P(Z = \widetilde{z} \mid Y = \widetilde{y}, \theta^{t})\\
    &=
    \sum_{\substack{\widetilde{z}\in\Omega^n_{s_0}\\Y(\widetilde{z})=\widetilde{y}}}
    \sum_{i=1}^{\sfrac{(|\widetilde{z}|-1)}{2}} \log \delta(\trans_i(\widetilde{z}) \mid \theta) \cdot P(Z = \widetilde{z} \mid Y = \widetilde{y}, \theta^{t})\\
    &= \sum_{\trans \in T_\delta} \sum_{\substack{\widetilde{z}\in\Omega^n_{s_0}\\Y(\widetilde{z})=\widetilde{y}}} X_\trans(\widetilde{z}) \cdot \log \delta(\trans \mid \theta) \cdot P(Z = \widetilde{z} \mid Y = \widetilde{y}, \theta^{t})\\
    &= \sum_{\trans \in T_\delta} \log \delta(\trans \mid \theta) \cdot
    \underbrace{\sum_{\substack{\widetilde{z}\in\Omega^n_{s_0}\\Y(\widetilde{z})=\widetilde{y}}}
      X_\trans(\widetilde{z}) \cdot P(Z = \widetilde{z} \mid Y =
      \widetilde{y}, \theta^{t})}_
    {E_{s_0}[X_\trans \mid Y = \widetilde{y}, \theta^{t}]}\\
    &= \sum_{s \in S} \sum_{\trans \in
        T^s_\delta} \log \delta(\trans \mid \theta) \cdot
    E_{s_0}[X_\trans \mid Y = \widetilde{y}, \theta^{t}]
  \end{align*}
  The third-last equality is given by the following computation, where
  $A \subseteq B^*$ and $a_i$ denotes the $i$-th symbol of
  $a\in B^*$. In this case nested sums can be rewritten as
  follows, where $\#_b(a)$ stands for the number of occurrences of $b$
  in $a$:
  \begin{align*}
    \sum_{a \in A}\sum_{i=1}^{|a|} T_{a_i}\cdot S_a &= \sum_{a \in
      A}S_a \cdot \sum_{i=1}^{|a|} T_{a_i} = \sum_{a \in A} S_a \cdot
    \sum_{b \in B} \underbrace{\sum_{\substack{1\le i\le
          |a|\\a_i=b}} \overbrace{T_{a_i}}^{T_b}}_{\#_b(a) 
      \cdot T_b} \\
    &= \sum_{b \in B}\sum_{a \in A} \#_b(a) \cdot T_b\cdot S_a
  \end{align*}
  The last value in the computation above can be maximized for each
  $s\in S$ independently and we obtain:
  \begin{align*}
    &\sum_{\trans \in T^s_\delta} \log \underbrace{\delta(\trans \mid \theta)}_{p_i} \cdot \underbrace{E_{s_0}[X_\trans \mid Y = \widetilde{y}, \theta^{t}]}_{a_i} = \sum_i (\log p_i)\cdot a_i
    % = \sum_i (\log p_i^{a_i}) = \log \prod_i p_i^{a_i}    
  \end{align*}
  This value is maximal if $p_i = \frac{a_i}{\sum_i a_i}$, which
  concludes the proof. This is a consequence of Gibbs' inequality,
  which says that, given a probability function $p\colon I\to [0,1]$
  with $I$ finite, then for every other probability function
  $q\colon I\to [0,1]$, we have that
  \[ \sum_{i\in I} p_i\log p_i \ge \sum_{i\in I} p_i\log q_i. \]

  It is also related to the fact that Kullback-Leibler divergence is
  always non-negative. Since the result is easy to derive we prove it
  in Lemma~\ref{lem:EM} %and the monotonicity of the $\log$
  %function
    \qed
\end{proof}

\begin{toappendix}

\begin{lemma}
  \label{lem:EM}
  Let $a_i \ge 0$, $i\in \{1,\dots,m\}$, be fixed. Let $p_i\in [0,1]$
  be unknown values such that $\sum_{i = 1}^m p_i = 1$. Then
  \[ \sum_{i = 1}^m a_i\cdot \log p_i \]
  is maximal if $p_i = \frac{a_i}{\sum_{i=1}^m a_i}$.
\end{lemma}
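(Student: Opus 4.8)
The plan is to maximize $f(p_1,\dots,p_m) = \sum_{i=1}^m a_i\log p_i$ subject to $\sum_{i=1}^m p_i = 1$ and $p_i \in [0,1]$. First I would dispose of degenerate cases: if some $a_i = 0$ the corresponding term is constant ($a_i\log p_i = 0$ with the convention $0\cdot(-\infty)=0$), so it contributes nothing and one can restrict attention to the indices with $a_i > 0$; and if all $a_i = 0$ the claim is vacuous (every feasible $p$ is a maximizer, and the stated formula is $0/0$, which we simply exclude). Also, if exactly one $a_i$ is positive, the maximum is attained by putting all mass on that index, which matches $p_i = a_i/\sum_j a_j = 1$. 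So assume at least two indices have $a_i > 0$ and the rest are zero.

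The cleanest route is the one the paper already points to: Gibbs' inequality. Set $A = \sum_{j=1}^m a_j > 0$ and define the candidate distribution $q_i = a_i/A$. For any feasible $p$, Gibbs' inequality (equivalently, non-negativity of the Kullback–Leibler divergence $D(q\,\|\,p) = \sum_i q_i\log(q_i/p_i) \ge 0$, with equality iff $p = q$) gives $\sum_i q_i\log q_i \ge \sum_i q_i\log p_i$. Multiplying through by $A$ yields $\sum_i a_i\log q_i \ge \sum_i a_i\log p_i$, i.e. $f(q) \ge f(p)$, so $q$ is the maximizer. (For this to make sense one needs $p_i > 0$ whenever $a_i > 0$; otherwise $f(p) = -\infty$ and the inequality holds trivially.) If one prefers a self-contained argument, the same conclusion follows from the concavity of $\log$ and Jensen's inequality: $\sum_i a_i\log(p_i/q_i) = A\sum_i q_i\log(p_i/q_i) \le A\log\!\big(\sum_i q_i\cdot p_i/q_i\big) = A\log(\sum_i p_i) = A\log 1 = 0$. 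Alternatively, a Lagrange-multiplier computation on the open simplex gives $a_i/p_i = \lambda$ for all $i$ with $a_i>0$, hence $p_i \propto a_i$, and since $f$ is strictly concave in those variables this stationary point is the unique global maximum.

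I expect no serious obstacle here; the only things requiring a little care are the boundary/degeneracy bookkeeping around $a_i = 0$ and the conventions $\log 0 = -\infty$, $0\cdot(-\infty)=0$, and ensuring the optimum is not attained at a boundary point of the simplex where some $p_i$ vanishes while $a_i > 0$ (it is not, since then $f = -\infty$ while $f(q)$ is finite). Everything else is a one-line appeal to Gibbs' inequality or Jensen.
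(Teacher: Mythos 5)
Your proof is correct, but it takes a different route from the paper's. The paper proves Lemma~\ref{lem:EM} by direct calculus: after assuming without loss of generality that all $a_i>0$, it eliminates the constraint by substituting $p_m = 1-\sum_{i=1}^{m-1}p_i$, sets the partial derivatives $\frac{a_j}{p_j}-\frac{a_m}{p_m}$ to zero, sums the stationarity conditions to solve for $p_m = a_m/\sum_j a_j$, and concludes that the maximum is attained there. You instead establish the global inequality $f(q)\ge f(p)$ for $q_i = a_i/\sum_j a_j$ directly, via Gibbs' inequality / non-negativity of KL divergence, with a Jensen-based verification $\sum_i a_i\log(p_i/q_i) \le A\log\bigl(\sum_i p_i\bigr)\le 0$ as a self-contained fallback. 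Two remarks on the comparison. First, the paper introduces Lemma~\ref{lem:EM} precisely because it prefers not to simply cite Gibbs' inequality (the theorem's proof says the result ``is easy to derive'' and proves it in the lemma), so your primary appeal to Gibbs is, in spirit, assuming the thing the lemma is there to establish; your Jensen computation, however, is a genuine elementary proof of it (note only that when you restrict to indices with $a_i>0$ the sum $\sum_i p_i$ may be strictly less than $1$, which only helps since $\log$ is increasing). Second, your argument is in one respect tighter than the paper's: the inequality route yields a global bound immediately, whereas the paper's derivative argument leaves implicit why the interior stationary point is the global maximum (it asserts ``the maximum must be reached where all derivatives are zero'' without explicitly invoking concavity or ruling out the boundary, where the objective is $-\infty$ whenever some $p_i=0$ with $a_i>0$ --- a point you do address). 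Your Lagrange-multiplier variant is essentially the paper's computation in cleaner form, with the concavity justification made explicit. Your bookkeeping of the degenerate cases ($a_i=0$, all $a_i=0$, the conventions $\log 0=-\infty$ and $0\cdot(-\infty)=0$) matches what the paper handles in its opening reduction.
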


\begin{proof}
  We can assume that $\log = \ln$, since logarithms differ only by a
  constant factor. Furthermore we can see that in order to achieve the
  maximal value, $p_i$ must be strictly larger than $0$ whenever
  $a_i>0$ and $p_i=0$ otherwise. (Remember the convention that
  $0\cdot \log 0 = 0\cdot(-\infty) = 0$.) Hence we can assume
  without loss of generality that $a_i > 0$ for all $i$.
  
  Since $\sum_{i = 1}^m p_i = 1$ we can replace $p_m$ by
  $1-\sum_{i=1}^{m-1} p_i$ and obtain
  \begin{align*}
    &\sum_{i = 1}^{m-1} a_i\cdot \ln p_i + a_m\cdot
    \ln \big(1-\sum_{i=1}^{m-1} p_i\big) 
  \end{align*}
  Remembering that $\frac{d}{dx} \ln x = \frac{1}{x}$, we now compute
  the partial derivates with respect to $p_j$ where $j \neq m$.
  \begin{align*}
    &\frac{\partial}{\partial p_j} \Big( \sum_{i = 1}^{m-1} a_i\cdot \ln
    p_i + a_m\cdot
    \ln \big(1-\sum_{i=1}^{m-1} p_i\big) \Big) \\
    &= \frac{a_j}{p_j} + a_m\cdot \frac{1}{1-\sum_{i=1}^{m-1}
      p_i}\cdot (-1) \\
    &= \frac{a_j}{p_j} - \frac{a_m}{p_m} = \frac{a_j\cdot p_m -
      p_j\cdot a_m}{p_j\cdot p_m}
  \end{align*}
  This equals $0$ if $a_j\cdot p_m - p_j\cdot a_m = 0$. We sum up over
  all indices $j$ and get
  \[ 0 = \sum_{j=1}^m \left(a_j \cdot p_m - p_j \cdot a_m \right)
    = p_m \cdot \sum_{j=1}^m a_j - a_m \cdot \sum_{j=1}^m p_j
    = p_m \cdot \sum_{j=1}^m a_j - a_m\]
  This implies \[ p_m = \frac{a_m}{\sum_j a_j}\] and by substitution,
  we obtain an analogous formula for all $p_j$. We can check that all
  conditions $a_j\cdot p_m - p_j\cdot a_m = 0$ are satisfied.

  The maximum must be reached in the point where all derivatives
  are zero, from which the statement follows. \qed
\end{proof}

\end{toappendix}

Note that there might be states where all outgoing transitions have
conditional expectation zero, i.e., such a state can not be reached via
the observation sequence. In this case we keep the previous
parameters. If we adhere to this, we can always guarantee that the
requirement of Lemma~\ref{prop:prob-space-welldefined} is maintained,
since if an outgoing transition of a state has conditional expectation
greater than zero, there must be a path of non-zero probability to an
observation.

% \todo[inline]{B: Discuss what happens if there is a state where all
%   outgoing transitions have conditional expectation zero. Can we
%   maintain the requirement of
%   Lemma~\ref{prop:prob-space-welldefined}?} 

% We
% %hence
% discovered that
% %in order to maximize
% when maximizing the probability of
% observing the symbol sequence $\widetilde{y}$ in a given
% %probabilistic
% system, the transition probabilities should be proportional to the
% conditional expectation of the number of traversals of
% transitions.

% More precisely, a transition starting in state $s$ is set to the
% normalized number of times, it is expected to be involved in the
% production of sequence $\widetilde{y}$.  The normalization factor is
% calculated by summing up the conditional expectation values of all
% transitions starting in $s$, such that the premise of a probabilistic
% system can be upheld.

\begin{toappendix}

\section{Contractivity}
\label{sec:contractivity}

The claims on contractivity made in the paper deserve further
elaboration. We first define the notion of a contractive function. 

\begin{definition}[Contractive function]
  Let $\mathbb{R}^\mathcal{W}$ be the set of all functions from a set
  $\mathcal{W}$ to $\mathbb{R}$. We use the supremum (or maximum) distance and
  define
  $d_\mathrm{sup}(g_1,g_2) = \sup_{W\in\mathcal{W}} |g_1(W)-g_2(W)|$
  for $g_1,g_2\colon \mathcal{W}\to \mathbb{R}$.

  A function $F\colon \mathbb{R}^\mathcal{W}\to \mathbb{R}^\mathcal{W}$ is
  contractive whenever for all $g_1,g_2\in \mathbb{R}^\mathcal{W}$ it holds
  that $d_\mathrm{sup}(F(g_1),F(g_2)) \le q\cdot d_\mathrm{sup}(g_1,g_2)$ for some $0\le q < 1$.

  We say that $F$ is contractive after $k$ iterations if $F^k$ is
  contractive.
\end{definition}

It is well-known from the Banach fixpoint theorem that contractive
functions over complete metric spaces have unique fixpoints and
$\mathbb{R}^\mathcal{W}$ with the sup-metric is complete. Furthermore
any sequence $(g_i)_{i\in\mathbb{N}}$ with $g_{i+1} = F(g_i)$
converges to this fixpoint. Now, every fixpoint of $F$ is a fixpoint
of $F^k$ and vice versa. The latter direction holds, since fixpoint
iteration for $F$ from a fixpoint $x$ of $F^k$ (with $F^k(x) = x$)
clearly converges again to $x$. Hence a function $F$ that is
contractive after $k$ iterations also has a unique fixpoint and enjoys
the same convergence property (although with a potentially slower
convergence rate).

\smallskip

We now argue why the fixpoint functions that we consider are
contractive after a certain number of iterations.

The fixpoint equation systems set up in
Sct.~\ref{sec:probability-space-random-variables} (proof of
Prop.~\ref{prop:prob-space-welldefined}) and Sct.~\ref{prop:cond-exp}
are over a set $\mathcal{W}$ of variables of the form
$W_s^{\widetilde{y}}$, where $s\in S$ and $\widetilde{y}\in\Sigma^*$
is the suffix of a given word $\bar{y}\in\Sigma^*$ with
$n=|\bar{y}|$. (Or alternatively the variables are of the form
$S_s^n$, see the proof of Prop.~\ref{prop:prob-space-welldefined},
leading to an analogous argument.) We define
$o(W_s^{\widetilde{y}}) = \widetilde{y}$. Note that $\mathcal{W}$ is
finite, since the state space $S$ is finite.

The corresponding fixpoint function is a monotone function
$F\colon \mathbb{R}^\mathcal{W}\to \mathbb{R}^\mathcal{W}$ where, for
$g\colon \mathcal{W}\to \mathbb{R}$:
\begin{equation}
 F(g)(W) = \sum_{W'\in\mathcal{W}} p_{W,W'} \cdot g(W') +
 D_W, \label{eq:fixpoint}
\end{equation}
where $p_{W,W'} \in [0,1]$ such that for each $W\in\mathcal{W}$ we
have $\sum_{W'\in\mathcal{W}} p_{W,W'} \le 1$ and $D_W$ is a
non-negative constant. Furthermore $p_{W,W'} > 0$ implies
$|o(W)| \ge |o(W')|$. In addition we can assume that $o(W)=\epsilon$
implies $p_{W,W'} = 0$ (and hence $F(g)(W) = D_W$ is a constant).

Such functions are clearly non-expansive (which means that the
contractivity requirement holds for $q=1$) but not necessarily
contractive.

However, we know that the probabilities $p_{W,W'}$ are transition
probabilities and the length of the observed sequence decreases if one
takes a transition that is labelled with an observable symbol. Due to
the requirement of Prop.~\ref{prop:prob-space-welldefined} we know
that each state has a path of non-zero probability that contains such
an observation. For each state $s\in S$ we consider the minimum length
of such a path and we take the maximum over all these minimums and
obtain $k$. Then we know that the fixpoint equation associated with
$F^k$ is of the same form as for $F$ above (see~(\ref{eq:fixpoint}))
and additionally for each $W\in \mathcal{W}$
\begin{itemize}
\item there exists $\overline{W}\in\mathcal{W}$ with
  $p_{W,\overline{W}} > 0$ and $|o(W)| > |o(\overline{W})|$ (if we
  take a transition with the next label to observe, reducing the
  length of the observation sequence) \emph{or}
\item $\sum_{W'\in\mathcal{W}} p_{W,W'} < 1$ (if a state has an
  outgoing transition with a label that does not match the next
  observation). 
\end{itemize}
This means that either the second condition holds after at most
$n\cdot k$ iterations or we reach the last observation of $\bar{y}$ on
a path of non-zero probability of length at most $n\cdot k$. The
latter means that the term for $F^{n\cdot k}(g)(W)$ contains --
multiplied with a non-zero probability -- a variable $\overline{W}$
with $o(\overline{W}) = \epsilon$, for which $F(g)(\overline{W})$ is
constant. That is, after $m+1$ iterations the corresponding fixpoint
equation~(\ref{eq:fixpoint}) satisfies
$q_W := \sum_{W'\in\mathcal{W}} p_{W,W'} < 1$ for each
$W\in\mathcal{W}$. Then we have, given
$g_1,g_2\colon \mathcal{W}\to \mathbb{R}$:

\begin{align*}
  &d_\mathrm{sup}(F^{m+1}(g_1),F^{m+1}(g_2)) \\
  &= \max_{W\in\mathcal{W}} \Big| \big(\sum_{W'\in\mathcal{W}}
  p_{W,W'} \cdot g_1(W') + D_W\big) -
  \big(\sum_{W'\in\mathcal{W}} p_{W,W'} \cdot g_2(W') + D_W\big)\Big| \\
  &\le \max_{W\in\mathcal{W}}
  \sum_{W'\in\mathcal{W}} p_{W,W'} \cdot |g_1(W')-g_2(W')| \\
  &\le \max_{W\in\mathcal{W}} \underbrace{\big(\sum_{W'\in\mathcal{W}}
    p_{W,W'}\big)}_{q_W}\cdot
  \max_{W'\in\mathcal{W}} |g_1(W')-g_2(W')| \\
  &\le \underbrace{(\max_{W\in\mathcal{W}} q_W)}_q \cdot
  \max_{W'\in\mathcal{W}} |g_1(W')-g_2(W')| = q\cdot
  d_\mathrm{sup}(g_1,g_2)
\end{align*}

The first inequality uses the fact that
$|\sum_{i\in I} a_i| \le \sum_{i\in I} |a_i|$, while the second
inequality holds since
$\sum_{i\in I} p_i\cdot a_i \le (\sum_{i\in I} p_i)\cdot \max_{i\in I}
a_i$.

Since $q<1$ we have contractivity after $m+1$ iterations.

In fact, the arguments are similar to the setting of absorbing
Markov chains \cite{gs:markov-chains}, where the absorption property
is used to guarantee unique solutions.

\end{toappendix}

\section{Conclusion}
\label{sec:conclusion}

In this paper, we considered HMMs that admit unobservable $\epsilon$-transitions. 
We presented algorithms for determining the most probable explanation 
(i.e. a sequence of hidden states) given an observation and a method for 
parameter learning. 
For this, we generalized the Viterbi and the Baum-Welch algorithm to consider 
$\epsilon$-transitions (including $\epsilon$-loops) and provided the respective proofs of their soundness. 
By allowing state changes of which the observer is unaware we can model false 
negatives, i.e. actions that have taken place but have not been observed by a sensor. 
This extends the applicability of HMMs as a modeling technique to the domain of sensor-based systems, 
which always have to consider the probability of sensor errors. 
For example, we now have the methods to compare observations made by sensors with the computed most likely explanation. 
When these two drift further apart over time, we can conclude that the real-world system 
is subject to parameter drift or degrading sensor quality. 
Furthermore, we plan to use the HMMs to clean data sets by replacing
observations with their most probable explanation. Parameter learning
will be needed to learn and adapt the model parameters based on recorded observations.

\bibliographystyle{plain}
\bibliography{references}

\begin{thebibliography}{10}

\bibitem{a:l-star}
Dana Angluin.
\newblock Learning regular sets from queries and counter-examples.
\newblock {\em Information and Control}, 75(2):87--106, 1987.

\bibitem{bjm:max-likelihood-speech}
Lalit~R. Bahl, Frederick Jelinek, and Robert~L. Mercer.
\newblock A maximum likelihood approach to continuous speech recognition.
\newblock {\em {IEEE} Trans. Pattern Anal. Mach. Intell.}, 5(2):179--190, 1983.

\bibitem{b:maximization-technique}
Leonard~E Baum, Ted Petrie, George Soules, and Norman Weiss.
\newblock A maximization technique occurring in the statistical analysis of
  probabilistic functions of {M}arkov chains.
\newblock {\em The annals of mathematical statistics}, 41(1):164--171, 1970.

\bibitem{b:gentle-tutorial-EM}
Jeff Bilmes.
\newblock A gentle tutorial of the {EM} algorithm and its application to
  parameter estimation for {G}aussian mixture and {H}idden {M}arkov {M}odels.
\newblock Technical Report TR-97-021, International Computer Science Institute,
  1997.

\bibitem{dlr:em-algorithm}
A.P. Dempster, N.M. Laird, and D.B. Rubin.
\newblock Maximum likelihood from incomplete data via the {EM} algorithm.
\newblock {\em J. Roy. Stat. Soc.}, 39(1):1--38, 1977.

\bibitem{dde:pa-hmm}
Pierre Dupont, Fran{\c{c}}ois Denis, and Yann Esposito.
\newblock Links between probabilistic automata and {H}idden {M}arkov {M}odels:
  probability distributions, learning models and induction algorithms.
\newblock {\em Pattern Recognit.}, 38(9):1349--1371, 2005.

\bibitem{gs:markov-chains}
Charles Grinstead and Laurie Snell.
\newblock Markov chains.
\newblock In {\em Introduction to Probability}, chapter~11, pages 405--470.
  American Mathematical Society, second edition, 1997.

\bibitem{j:statistical-methods}
Frederick Jelinek.
\newblock {\em Statistical methods for speech recognition}.
\newblock MIT press, 1998.

\bibitem{f:viterbi-algorithm}
G.D.~Forney Jr.
\newblock The {V}iterbi algorithm.
\newblock {\em Proceedings of the IEEE}, 61(3), March 1973.

\bibitem{otdfd:learning-scripts}
John Orr, Prasad Tadepalli, Janardhan Doppa, Xiaoli Fern, and Thomas
  Dietterich.
\newblock Learning scripts as {H}idden {M}arkov {M}odels.
\newblock {\em Proceedings of the AAAI Conference on Artificial Intelligence},
  28(1), 2014.

\bibitem{r:tutorial-hidden-markov-models}
Lawrence~R. Rabiner.
\newblock A tutorial on {H}idden {M}arkov {M}odels and selected applications in
  speech recognition.
\newblock {\em Proceedings of the IEEE}, 77(2), 1989.

\bibitem{v:viterbi-algorithm}
A.~Viterbi.
\newblock Error bounds for convolutional codes and an asymptotically optimum
  decoding algorithm.
\newblock {\em IEEE Transactions on Information Theory}, 13(2):260--269, April
  1967.

\bibitem{wv:hmm-methods-protocols}
David~R. Westhead and M.~S. Vijayabaskar, editors.
\newblock {\em Hidden Markov Models -- Methods and Protocols}.
\newblock Springer, 2017.

\end{thebibliography}

\end{document}